\newtheorem{theorem}{Theorem}
\theoremstyle{definition}
\newtheorem{definition}{Definition}
\crefname{section}{Sec.}{Secs.}
\Crefname{section}{Section}{Sections}
\Crefname{table}{Table}{Tables}
\crefname{table}{Tab.}{Tabs.}
\begin{document}

 \title{Texture Representation via Analysis and Synthesis\\ with Generative Adversarial Networks}

\author{Jue Lin\\
Northwestern University\\
{\tt\small jue.lin@u.northwestern.edu}
\and
Gaurav Sharma\\
University of Rochester\\
{\tt\small gaurav.sharma@rochester.edu}
\and
Thrasyvoulos N. Pappas\\
Northwestern University\\
{\tt\small pappas@ece.northwestern.edu}
}
\maketitle

\begin{abstract}
We investigate data-driven texture modeling via analysis and synthesis with generative adversarial networks. 
For network training and testing, we have compiled a diverse set of spatially homogeneous textures, ranging from stochastic to regular. 
We adopt StyleGAN3 for synthesis and demonstrate that it produces diverse textures beyond those represented in the training data. 
For texture analysis, we propose GAN inversion using a novel latent domain reconstruction consistency criterion for synthesized textures, and iterative refinement with Gramian loss for real textures.
We propose perceptual procedures for evaluating network capabilities, exploring the global and local behavior of latent space trajectories, and comparing with existing texture analysis-synthesis techniques.
\end{abstract}

\section{Introduction}
\label{sec:intro}

Visual texture is of great importance for both human perception and computer vision.  It provides significant cues for object segmentation, shape determination, and material identification and characterization. 
Texture modeling is critical for many applications, including image quality and compression, scene analysis, computer graphics, and virtual reality.
The stochastic nature of a broad subset of visual textures necessitates a statistical approach for texture analysis and modeling.  Motivated by the effectiveness of human perception of visual texture, traditional approaches for texture modeling have relied on multiscale frequency decompositions to model early stages of brain processing \cite{cano88,porat89,heeger95b,portilla00,zhu96,debonet97}.
Such decompositions also justify the use of convolutional neural networks, which we discuss below.
The Portilla-Simoncelli analysis/synthesis algorithm \cite{portilla00} relies on a steerable filter decomposition to construct a parametric statistical model for representation of a broad class of visual textures.  However, as the authors point out \cite{portilla00}, there are also significant limitations.  Thus, the mathematical and perceptual representation of textures remains an open research challenge.

Recent advances in deep neural networks have sparked extensive research interest, ranging from discriminative to generative tasks. 
For discriminative tasks,~\eg classification~\cite{Alex:CNN:ACM2017,Simonyan:VGG:ICLR2015}, error rates have dropped significantly compared with traditional methods. 
For generative tasks, both generative adversarial networks~\cite{Goodfellow:GAN:NIPS2014} and diffusion models~\cite{sohl:diffusion:ICML2015,Ho:DDPM:NeurIPS2020} have shown great potential for synthesizing realistic images.
Progresses in discriminative and generative methods for general images have also been extended to visual textures. 
Designing a discriminative network for textures \cite{Bruna:ScatNet:PAMI2013} or finetuning a pretrained (for general images) network to textures \cite{Andrea:texture:PRL2016} are the most common approaches for texture recognition, while the majority of texture synthesis algorithms relies on generative models \cite{Ulyanov:TexNet:ICML2016,Ulyanov:imp_texturenet:CVPR2017,Bergmann:PSGan:ICML2017}. 
However, there has not been any significant research effort in texture modeling holistically. 
The aim of our research has been on modeling textures independently of any specific downstream tasks. Our focus is on learning a latent space representation of textures via generative adversarial networks (GANs) and inversion methods. The key contributions can be summarized as follows:
\begin{itemize}
\setlength\itemsep{-3pt}
\item We build a diverse image dataset of spatially homogeneous textures, ranging from stochastic to regular, and train a StyleGAN3 network for texture synthesis. The availability of such a dataset enables robust training of a texture generation network as a backbone.
\item We design a subjective test for investigating the behavior of the trained generator, and use it to demonstrate that the network is capable of generalization as well as following the distribution of the training data.
\item For texture analysis, we propose a novel paradigm of training an encoder network that inverts the generator network by enforcing latent reconstruction consistency, without the need for large-scale image datasets nor supervision via image-domain loss functions. 
\item We refine the process of re-synthesizing real textures with encoder initialization and iterative optimization with Gramian loss, yielding significant improvement in reconstruction similarity.
\item We explore global and local behavior of latent space trajectories and compare with existing texture analysis-synthesis techniques.
\end{itemize}

\section{Related work}
\label{sec:related_work}
\subsection{Texture modelling}
Traditional approaches for texture modelling have relied on multiscale frequency (subband) decompositions. 
Heeger and Bergen~\cite{heeger95b} proposed an iterative scheme for synthesizing stochastic textures that relies on matching histograms of a steerable filter decomposition. 
Portilla and Simoncelli~\cite{portilla00} parametrized the space of visual textures with subband statistics for generating a broader set of textures. Other parametric approaches utilize Markov Random Fields (MRFs) for texture synthesis but have serious limitations \cite{Levina:TextureSynMRF:AS06,Paget:TextureSynMRF:TIP98}. 
Nonparametric approaches for texture synthesis\cite{efros_iccv99} produce realistic textures but do not incorporate explicit texture models.

Recent deep learning-based methods have stimulated further research interests in texture modelling, and texture analysis and synthesis in particular. 
For texture synthesis, Gatys~\etal \cite{Gatys:TextureSynthCNN:NIPS2015} use a pretrained object recognition network to obtain Gram matrices of activation maps from a given texture, and iteratively optimize a randomly initialized image to match the target Gram matrices.
Ulyanov~\etal \cite{Ulyanov:TexNet:ICML2016,Ulyanov:imp_texturenet:CVPR2017} design a feedforward network to synthesize textures without any iterative optimization. 
Implicit Neural Representation~(INR) methods \cite{Chen:ImplicitDecode:CVPR2019,Sitzmann:INR:NIPS2020} by predicting the RGB values from a pixel coordinate and then generating textures on a regular grid \cite{Henzler:3DTexture:CVPR2020,Oechsle:TexField:ICCV2019,Portenier:GramGAN:NIPS2020}. 
For texture analysis, research has mostly focused on texture recognition or classification networks \cite{Liu:bow2cnn:IJCV2019}, not analysis for synthesis.
Cimpoi~\etal \cite{cimpoi_ijcv2016} proposed to pool features from a pretrained network for texture classification. Finetuning pretrained networks for texture recognition has also been popular \cite{Andrea:texture:PRL2016, Lin:texture:ICCV2015,Dai:texture:CVPR2017}. 
Designing network architectures suitable for textures can also be found in \cite{Bruna:ScatNet:PAMI2013,Chan:PCANet:TIP2015}. 

\subsection{GAN synthesis and inversion methods}
Goodfellow~\etal \cite{Goodfellow:GAN:NIPS2014} introduced a novel training paradigm of generative adversarial networks (GANs), where a generative network and a discriminative network are trained jointly as a two-player minmax game. 
Several forms of GANs have been proposed for image translation \cite{Isola:pix2pix:CVPR2017}, super resolution~\cite{Ledig:SRGAN:CVPR2017}, and other applications.
The architectural family of StyleGAN has been the state-of-the-art for generic image synthesis \cite{Karras:StyleGAN:CVPR2020,Karras:StyleGAN2:CVPR2020,Karras:StyleGAN3:NIPS2021}.
However, the application of GANs to texture modeling has not received adequate attention.
Jetchev~\etal \cite{Jetchev:SpatialGAN:NIPS2016W} were the first to apply GANs to textures and proposed the spatial GAN (SGAN) for synthesizing a single texture.
Bergmann~\etal \cite{Bergmann:PSGan:ICML2017} proposed the periodic spatial GAN (PSGAN) to learn diverse textures in a latent space.

GAN inversion is the process of finding the latent space representation of a given image.  
There are two approaches for GAN inversion in the literature, optimization-based and learning-based, as described in a review paper by Xia~\etal \cite{Xia:InvertSurvey:PAMI}. 
Optimization-based GAN inversion \cite{Zhu:ManipulateGAN:ECCV2016,Creswell:InvertGAN:NNLS2019, Abdal:Image2StyleGAN:ICCV2019, Abdal:Image2StyleGAN++:CVPR2020} searches for target latent vectors by fixing parameters of the generator and treating latent vectors as the optimization variables.
Learning-based inversion \cite{ Xu:GH-Feat:CVPR2021, Chai:InvertGAN:ICLR2021} adds an encoder network in the training process and facilitates inference time at the expense of potential accuracy degradation. 
Hybrids of the two approaches have also demonstrated good results \cite{Zhu:ManipulateGAN:ECCV2016, Zhu:IDInvert:ECCV2020, Shen:InterFaceGAN:CVPR2020}. 
The investigation of proper inversion loss functions for training the encoder has also been an active area. 
Commonly adopted losses include pixel-wise loss, content loss \cite{Johnson:Perceptual:ECCV2016}, style loss \cite{Gatys:TextureSynthCNN:NIPS2015}, and learned perceptual image patch similarity (LPIPS) \cite{Zhang:LPIPS:CVPR2018}. 
Additionally, a network trained for human face recognition \cite{Deng:Arcface:CVPR2019} can be utilized as a loss ensuring the reconstructed human identity \cite{Richardson:pSp:CVPR2021}.
To our best knowledge, GAN inversion has not been applied to texture analysis for synthesis so far.

\section{Method}
\label{sec:method}
\subsection{Preliminaries}
\label{sec:prelim}
A set of realistic textures is fundamental to texture modelling. 
A widely adopted definition of a visual texture is given by Portilla and Simoncelli~\cite{portilla00} as: An image that is spatially homogeneous and usually contains repeated elements, often with random variations in position, orientation, and color.
Based on the definition, a training set $\Omega^{train}$ of textures should possess two types of diversity. 
First, the cardinality of $\Omega^{train}=\Omega^1\cup\ \Omega^2\cup\cdots\Omega^i\cdots$ should be large,~\ie, representative of the different real-world textures.
Second, the training set should present stochastic variations within the same texture; specifically, each texture should itself be represented by different crops  $\Omega^i=\{\mathbf{I}^i_1,\mathbf{I}^i_2\cdots$\}, where $\mathbf{I}^i_j$ denotes the $j$-th crop from the same texture $\Omega^i$, where the distinct crops $\mathbf{I}^i_j\sim\Omega^i$ should be perceived {\it homogeneous} and perceptually equivalent, despite their pixel-wise differences. 
In reality, the universe of realistic textures $\Omega^{real}$ is extremely large and $\Omega^{train}$ can only provide an approximation to $\Omega^{real}$. 

Different from real textures, the set of generated textures $\Omega^{gen}$ is constrained by its algorithmic model. 
For instance, the space of generated textures from an adversarially trained generator $G_{\boldsymbol\theta}(\cdot)$, parametrized by ${\boldsymbol\theta}$, is given by $\Omega^{gen}=\{G_{\boldsymbol\theta}({\boldsymbol z})~|~ {\boldsymbol z}\sim\mathcal{N}(0, 1)\}$, where $\mathcal{N}(0, 1)$ is a $d$-dimensional standard normal distribution. 
Pursuing an ideal $G_{\boldsymbol\theta}$ to minimize the set differences $\Omega^{train}\setminus\Omega^{gen}$ and $\Omega^{gen}\setminus\Omega^{train}$ is beyond the scope of this work. 
Nonetheless, we adopt the state-of-the-art StyleGAN3~\cite{Karras:StyleGAN3:NIPS2021} as our generator, and model the textures in its latent space by GAN inversion methods.

\subsection{Texture generator}
To account for both aspects of texture diversity,~\ie, across and within textures, the training of StyleGAN3 utilizes samples from different textures as well as from the same texture by minimizing the following objectives:
\begin{align}
\mathcal{L}(\boldsymbol\phi)=&\ \mathbb{E}_{\boldsymbol z}\{D_{\boldsymbol\phi}(G_{\boldsymbol\theta}({\boldsymbol z})\}-\mathbb{E}_{\Omega^i}\{\mathbb{E}_{\textbf{I}^i_j|\Omega^i}\{D_{\boldsymbol\phi}(\textbf{I}^i_j)\}\} \label{eq:loss_D}\\
\mathcal{L}(\boldsymbol\theta)=&-\mathbb{E}_{\boldsymbol z}\{D_{\boldsymbol\phi}(G_{\boldsymbol\theta}({\boldsymbol z}))\} \label{eq:loss_G}
\end{align}
where texture $\Omega^i$ is drawn from $\Omega^{train}$ and a spatial crop $\textbf{I}^i_j$ of size $256\times256$ is randomly sampled from $\Omega^i$, and $D_{\boldsymbol\phi}(\cdot)$ is the discriminator network parametrized by ${\boldsymbol\phi}$. 
Upon completion of training, the generator parameters ${\boldsymbol\theta}$ are fixed and therefore the space of generated textures, $\Omega^{gen}$, is induced.

\subsection{Training encoder by generated textures}

\begin{figure}
\includegraphics[width=0.48\textwidth]{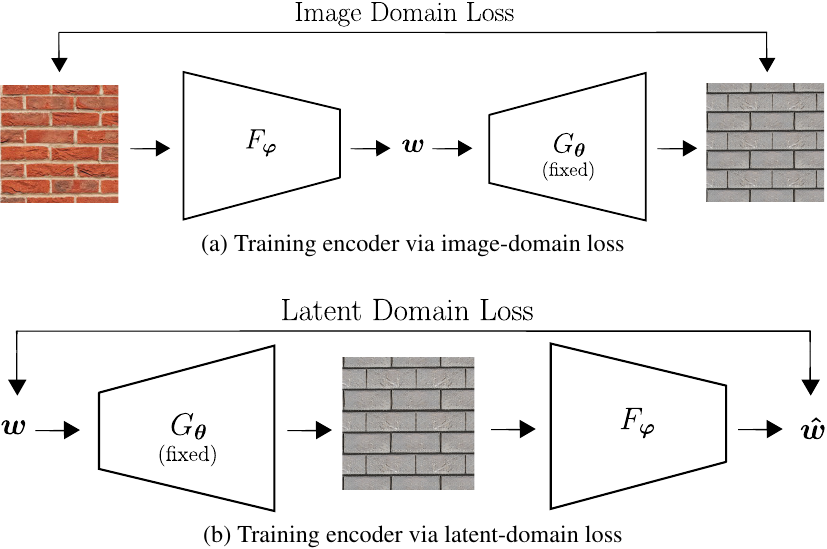}
\caption{Comparison between traditional methods of training encoder vs the proposed method based on latent domain supervision.} \label{fig:latentdomainsupervision}
\end{figure}

Inverting a generator for texture images is different from inversion of generic image generation networks, such as those used to generate human faces, in three ways: 
1. there exist multiple datasets of millions of realistic human faces which facilitates robust training of an encoder network;
2. reliable pretrained networks,~\eg, ArcFace~\cite{Deng:Arcface:CVPR2019}, are available as reliable training objective functions~\cite{Richardson:pSp:CVPR2021}. 
3. losses containing location-wise information,~\eg pixel-wise $\mathcal{L}_2$ loss and content loss~\cite{Johnson:Perceptual:ECCV2016}, provide strong supervision for training human faces, but they are ill suited to textures as textures often present stochastic variations. 

Instead of relying on realistic texture images for inversion, we approach the problem from a different perspective. 
We start with the following definition and theorem: 
\begin{definition}[Invertibility]
An image $\mathbf{I}$ is invertible by $G_{\boldsymbol\theta}$ iff. $\exists$ a latent code ${\boldsymbol z}$ s.t. $G_{\boldsymbol\theta}({\boldsymbol z})=\mathbf{I}$\label{def:invert}
\end{definition}
\begin{theorem}
Regardless of whether $\mathbf{I}\in\Omega^{train}$ or $\mathbf{I}\notin\Omega^{train}$, if $\mathbf{I}$ is invertible by $G_{\boldsymbol\theta}$, then $\mathbf{I}\in\Omega^{gen}$ 
\label{thm:invertibility}
\end{theorem}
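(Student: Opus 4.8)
The plan is to prove the statement directly from the two definitions, since the claim is essentially a matter of unwinding what ``invertible'' and ``$\Omega^{gen}$'' mean. First I would invoke Definition~\ref{def:invert}: the hypothesis that $\mathbf{I}$ is invertible by $G_{\boldsymbol\theta}$ supplies, by definition, a specific latent code $\boldsymbol z^\ast$ satisfying $G_{\boldsymbol\theta}(\boldsymbol z^\ast)=\mathbf{I}$. The goal is then to show this same $\boldsymbol z^\ast$ witnesses membership of $\mathbf{I}$ in $\Omega^{gen}=\{G_{\boldsymbol\theta}(\boldsymbol z)\mid \boldsymbol z\sim\mathcal{N}(0,1)\}$.

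The key step is to reconcile the two quantifications over $\boldsymbol z$. Invertibility asks only for the existence of \emph{some} $\boldsymbol z^\ast$ in the domain of $G_{\boldsymbol\theta}$, whereas $\Omega^{gen}$ ranges over latent codes drawn from $\mathcal{N}(0,1)$. I would observe that the $d$-dimensional standard normal has full support, i.e.\ every point of $\mathbb{R}^d$ has positive density, so the set-builder condition $\boldsymbol z\sim\mathcal{N}(0,1)$ places no restriction on admissible arguments beyond $\boldsymbol z\in\mathbb{R}^d$. Consequently $\Omega^{gen}$ coincides with the image $G_{\boldsymbol\theta}(\mathbb{R}^d)$ of the generator over its entire latent domain. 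Since $\boldsymbol z^\ast\in\mathbb{R}^d$, we obtain $\mathbf{I}=G_{\boldsymbol\theta}(\boldsymbol z^\ast)\in G_{\boldsymbol\theta}(\mathbb{R}^d)=\Omega^{gen}$, completing the argument.

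There is no genuinely hard analytic step here; the only thing to be careful about is the interpretation of $\boldsymbol z\sim\mathcal{N}(0,1)$ inside the set-builder notation, and confirming the full-support property that turns it into a quantifier over all of $\mathbb{R}^d$. The conceptual content---and presumably the reason the theorem is stated at all---is what the proof makes manifest: the training-set dichotomy $\mathbf{I}\in\Omega^{train}$ versus $\mathbf{I}\notin\Omega^{train}$ never enters the argument. Membership in $\Omega^{gen}$ is governed entirely by the fixed generator $G_{\boldsymbol\theta}$, independently of the data used to train it. I would close by emphasizing this consequence, since it is exactly what licenses the paper's strategy of training the inversion encoder purely from synthesized textures with latent-domain supervision, rather than from large-scale real texture datasets.
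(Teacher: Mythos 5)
Your proof is correct and follows essentially the same route as the paper's one-line argument: invoke Definition~\ref{def:invert} to obtain a witness $\boldsymbol z^\ast$ with $G_{\boldsymbol\theta}(\boldsymbol z^\ast)=\mathbf{I}$ and conclude $\mathbf{I}\in\Omega^{gen}$ directly. The only difference is that you explicitly justify why the condition $\boldsymbol z\sim\mathcal{N}(0,1)$ in the definition of $\Omega^{gen}$ imposes no restriction (full support of the Gaussian on $\mathbb{R}^d$), a step the paper leaves implicit.
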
 
\begin{proof}
From definition~\ref{def:invert}, if image $\mathbf{I}$ is invertible by $G_{\boldsymbol\theta}$, then $\exists$ a latent code ${\boldsymbol z}$ such that $\mathbf{I}=G_{\boldsymbol\theta}({\boldsymbol z})\in\Omega^{gen}$
\end{proof}
Theorem~\ref{thm:invertibility} suggests intuitively that an ideal GAN inversion algorithm should be capable of inverting any generated image $\mathbf{I}\in\Omega^{gen}$ into its ground truth latent code ${\boldsymbol z}$, which is directly sampled from the i.i.d Gaussian distribution.
Based on this intuition, we propose the following training paradigm:
\begin{align}
\mathcal{L}({\boldsymbol\varphi})=\mathbb{E}_{\boldsymbol z}\{||{\boldsymbol z}-F_{\boldsymbol\varphi}(G_{\boldsymbol\theta}({\boldsymbol z})) ||^2\}
\end{align}
where the encoder network $F_{\boldsymbol\varphi}(\cdot)$ transforms an input image into its latent space representation. 
However through experiments, we find that convergence is harder to achieve in the i.i.d. Gaussian latent space $Z$. 
We therefore resort to the $W$ space and train the encoder via minimization of the following loss:
\begin{align}
\mathcal{L}({\boldsymbol\varphi})&=\mathbb{E}_{\boldsymbol w}\{||{\boldsymbol w}-F_{\boldsymbol\varphi}(G_{\boldsymbol\theta}({\boldsymbol w})) ||^2_2\} \\
{\boldsymbol w}&=M({\boldsymbol z}),\ \ \ {\boldsymbol z}\sim\mathcal{N}(0, 1)
\end{align}
where $M(\cdot)$ is a mapping network, inside $G_{\boldsymbol\theta}$, that transforms vectors in $Z$ space into $W$ space. 
Such simplistic yet effective training paradigm has the following benefits: 
1. sampling latent vectors from a high dimensional $Z$ or $W$ space yields an abundance of training data, which is crucial for training any neural network; 
2. by working directly in the latent space, the need for image-domain losses,~\eg pixel-wise $\mathcal{L}_2$ loss and content loss~\cite{Johnson:Perceptual:ECCV2016}, is circumvented, preventing any fundamental inductive bias often posed by those image-domain losses.

\begin{figure*}[t]
\includegraphics[width=1.0\textwidth]{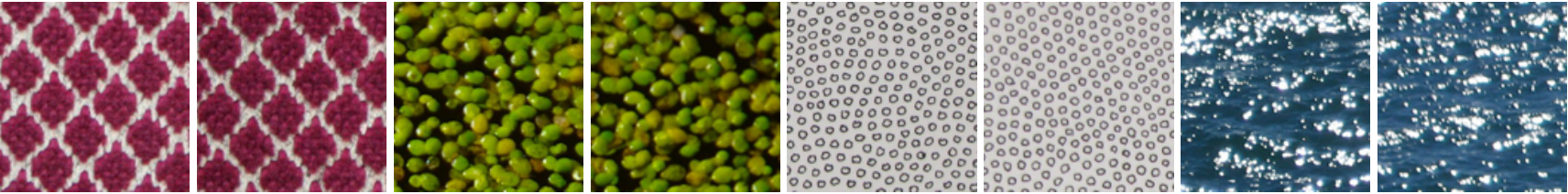}
\caption{Examples of spatially homogeneous crops from the dataset.}
\label{fig:dataset}
\end{figure*}

\begin{figure*}[t]
\includegraphics[width=1.0\textwidth]{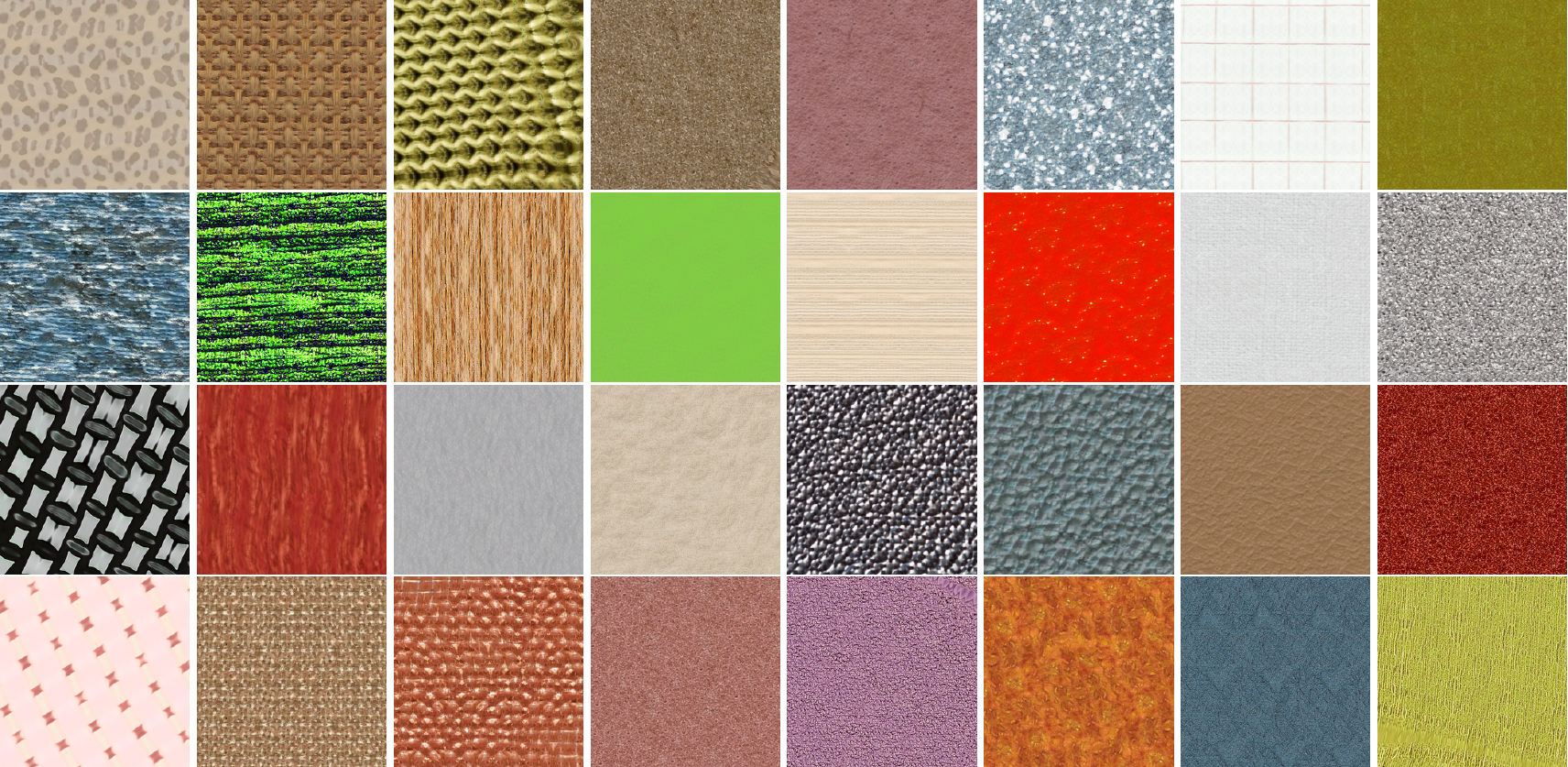}
\caption{Examples of generated images from StyleGAN3.}
\label{fig:textureSyn}
\end{figure*}

\subsection{Inverting training textures}

\begin{figure*}[t]
\centering
\includegraphics[width=0.92\textwidth]{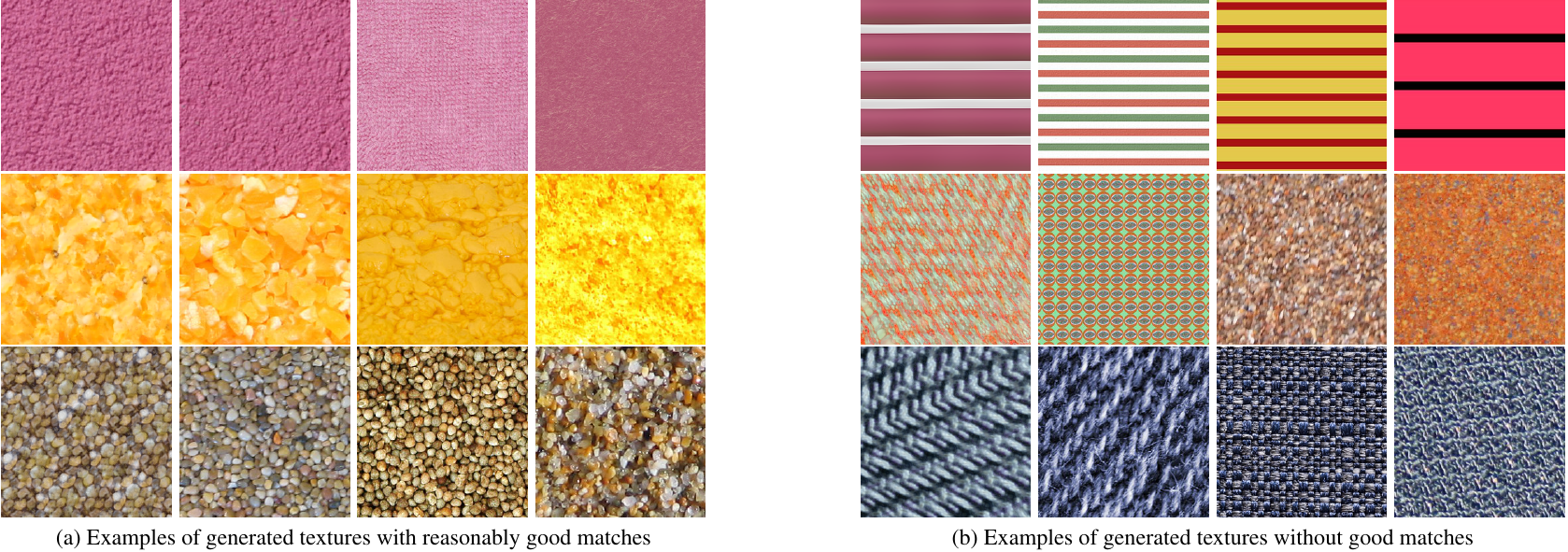}
\caption{Subjective tests: Generated query texture (left) and top matches from training set (right).}
\label{fig:visiprogRetrieve}
\end{figure*}
With a trained encoder $F_{\boldsymbol\varphi}$ available, the latent vector of any texture in the set of invertible training images,~\ie $\Omega^{gen}\cap\Omega^{train}$, can be faithfully inferenced. 
For uninvertible training images,~\ie $\Omega^{train}\setminus\Omega^{gen}$, we aim to find an optimal latent vector
\begin{align}
{\boldsymbol w}^*=\operatorname{argmin}_{\boldsymbol w}\mathcal{L}(\mathbf{I}, G_{\boldsymbol \theta}({\boldsymbol w}))\label{eq:opt_gan_invert}
\end{align}
where $\mathbf{I}$ is a texture from $\Omega^{train}\setminus\Omega^{gen}$. 
The corresponding update rule of ${\boldsymbol w}$ is:
\begin{align}
{\boldsymbol w}_0&=F_{\boldsymbol\varphi}(\mathbf{I})   \\
{\boldsymbol w}_t&={\boldsymbol w}_{t-1}-\alpha\nabla_{\boldsymbol w}\mathcal{L}(\mathbf{I}, G_{\boldsymbol \theta}({\boldsymbol w}_{t-1}))
\end{align}
where $\alpha$ is the learning rate. 
As will be discussed in later experiment sections, the initialization ${\boldsymbol w}_0$ significantly impacts the result of optimization, and utilizing the trained encoder in this regard serves as a robust initialization.

Besides initialization, the choice of criteria $\mathcal{L}$ in Eq.~\ref{eq:opt_gan_invert} is also crucial for a successful optimization. 
We find the Gram Matrices loss~\cite{Gatys:TextureSynthCNN:NIPS2015}
\begin{align}
&\mathcal{L}(\mathbf{I}_1,\mathbf{I}_2) = \sum_{\mathnormal{l}}[\frac{\Phi_{\mathnormal{l}}(\mathbf{I}_1)\Phi_{\mathnormal{l}}(\mathbf{I}_1)^T-\Phi_{\mathnormal{l}}(\mathbf{I}_2)\Phi_{\mathnormal{l}}(\mathbf{I}_2)^T}{C_{\mathnormal{l}}\times N_{\mathnormal{l}}^2}]^2    
\end{align}
is fairly robust, where $\mathnormal{l}$ is the layer index in the VGG16 network $\Phi$, and $\Phi_{\mathnormal{l}}$, $N_{\mathnormal{l}}$, $C_{\mathnormal{l}}$ denote feature maps, spatial size and number of channels at $\mathnormal{l}$-th layer of $\Phi$, respectively.

\begin{figure*}[t]
\centering
\includegraphics[width=1.0\textwidth]{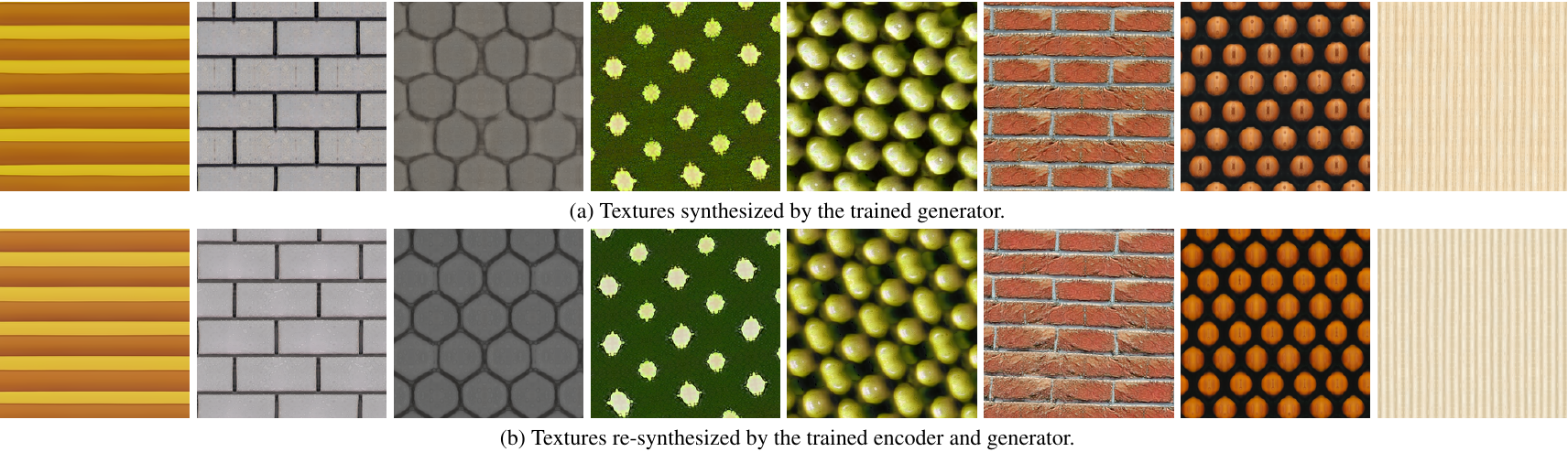}
\caption{Re-synthesis of textures generated in the space of generated textures using the trained encoder}
\label{fig:textureAnaSynGen}
\end{figure*}

\section{Experimental results}
\subsection{Dataset construction}

A wide variety of texture datasets is available in the literature. 
However, most of the existing datasets do not match the definition of visual texture we provided in Section~\ref{sec:prelim}, and are thus not appropriate for the texture representation problem we consider in this paper.
For instance, the Amsterdam Library of Textures (ALOT) \mbox{\cite{Burghouts:ALOT:PRL2009}}, the Describable Texture Dataset (DTD) \mbox{\cite{Cimpoi:DTD:CVPR2014}}, the Flickr Material Dataset (FMD) \mbox{\cite{Sharan:FMD:JOV2014}}, the Materials in Context 2500 (MINC-2500) dataset \mbox{\cite{Bell:MINC2500:CVPR2015}}, and the Ground Terrain in Outdoor Scenes (GTOS) dataset \mbox{\cite{Xue:GTOS:PAMI2022}}, all consist of images with multiple textured objects, which do not meet the criterion of spatially homogeneous textures. 
Moreover, some other datasets (e.g., CUReT \mbox{\cite{dana_koenderink_acmt99,curet}} and KTH-TIP \mbox{\cite{fritz2004kth}}) either lack adequate texture diversity or the size of the textures they contain does not allow multiple independent crops, which are necessary for network training.
To address these limitations, we gathered a comprehensive dataset of visual textures based on the following criteria:
(1) The textures are spatially homogeneous;
(2) The textures contain contain a sufficient number of independent $256\times256$ crops for learning the intra-texture distribution; 
(3) Each $256\times256$ crop contains at least 5 texton repetitions in each dimension; and
(4) The images are under a Creative Commons Public Domain license (CC0) or a custom license for academic use. 
Overall, we collected 2953 high quality texture images from multiple stock image websites, ranging from natural to artificial, periodic to stochastic, and fine to coarse scale. In addition, each texture image contains 20 to 50 independent crops. 
Examples of spatially homogeneous crops of the same texture in our dataset can be found in Fig.~\ref{fig:dataset}

\subsection{Training the generator}
As previously stated, StyleGAN3 is the latest iteration of the StyleGAN family and is the current state-of-the-art generator for generic images. 
We used $256\times256$ images to train a StyleGAN3 network from scratch based on Eqs.~\ref{eq:loss_D} and \ref{eq:loss_G} and followed the default training protocol of the official StyleGAN3-R implementation \cite{Karras:StyleGAN3:NIPS2021} with adaptive data augmentation \cite{Karras:Ada:NIPS2022}. 
Examples of textures generated by the trained generator are given in Fig.~\ref{fig:textureSyn}.

Given the relatively limited size of the training set, a reasonable concern is whether the generator network is overfitting the data. 
To check this, we designed a subjective test whereby human observers are shown 50 randomly selected textures synthesized by the network and are asked to find the perceptually most similar textures in the training set. 
Examples are given in Fig.~\ref{fig:visiprogRetrieve}, with the query synthesized image and the 3 most similar matches from the training set.
The subject test reveals that 8 of 50 generated textures can be properly matched with a training image, while the remaining 42 are entirely new textures generated by the network, demonstrating that the trained generator is capable of generalizing rather than overfitting.

\subsection{Training encoder with generated textures}
We adopt the pixel-to-Style-to-pixel~(pSp)~\cite{Richardson:pSp:CVPR2021} for the encoder network architecture. 
Specifically, the resolution of the input images is fixed at $256\times256$, and the encoder is trained with default Adam optimizer~\cite{Kingma:Adam:ICLR2015} for 500K iterations with batch size 32, learning rate $0.0001$, and decay rate of $0.2$ for every 100K iterations. 
The output of the original pSp is in the $W^+$ space, which produces multiple vectors from the $W$ space for each layer in the generator. 
We average all the $w$ vectors from pSp to yield a single $w$ vector representation to ensure the compactness of the representation space.
To examine the effectiveness of the trained encoder, textures are first synthesized by the generator network, and then are fed into the encoder to infer their corresponding latent vectors, which are subsequently fed into the generator for re-synthesis. 
Visual examples shown in Fig.~\ref{fig:textureAnaSynGen} indicate that the encoder is robust in inverting the space of generated textures $\Omega^{gen}$.

\begin{figure*}[t]
\centering
\includegraphics[width=1.0\textwidth]{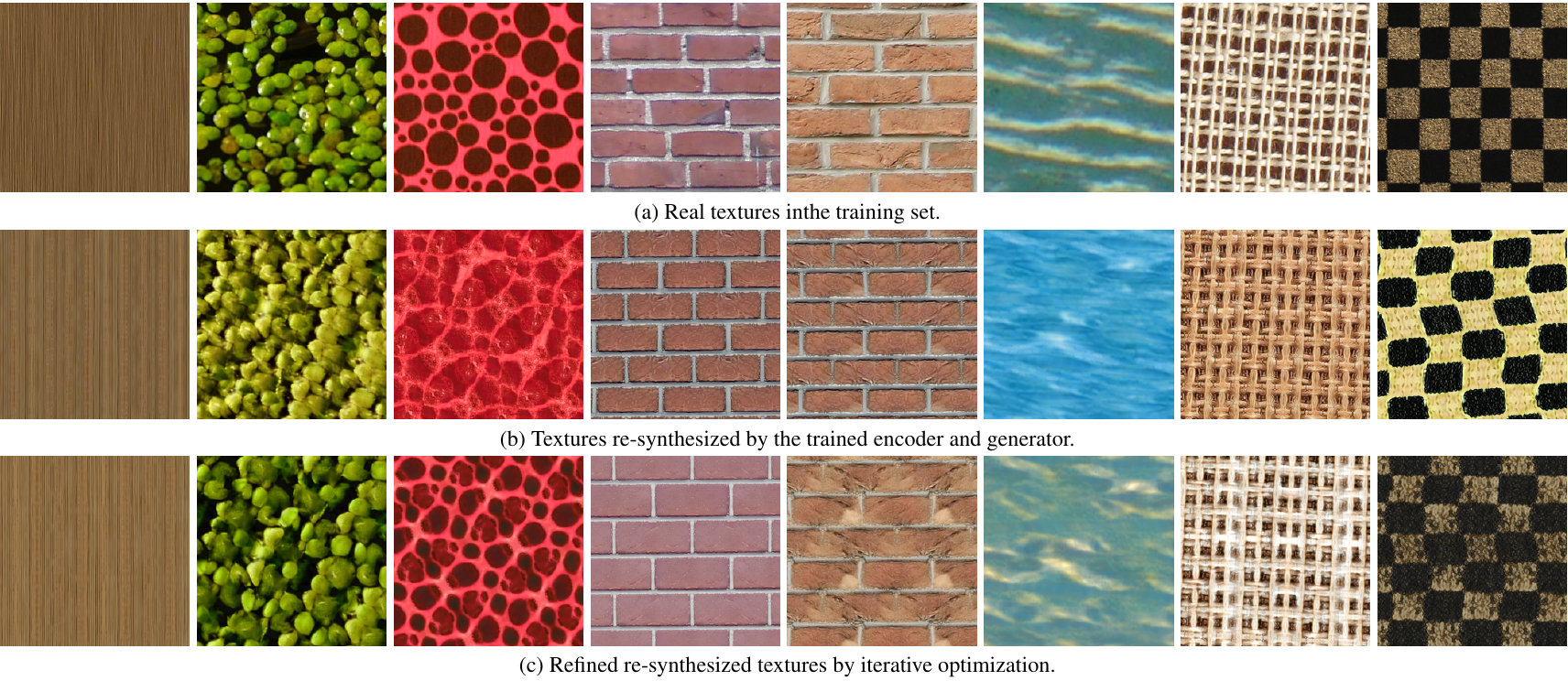}
\caption{Texture analysis and synthesis.}
\label{fig:textureAnaSynTrain}
\end{figure*}

\begin{table}[t]
\centering
\begin{tabular}{||l | c | c ||} 
 \hline
  & STSIM-1 & STSIM-2 \\ [0.5ex] 
 \hline\hline
 Random Init.+Opt. & 0.9291 & 0.9453   \\ 
 \hline
Mean $W$ Init.+Opt. & 0.9314 & 0.9467 \\
 \hline
 Encoder & 0.9561 & 0.9613 \\
 \hline
 Encoder+Opt. & \textbf{0.9748} & \textbf{0.9746} \\
 \hline
\end{tabular}
\caption{Quantitative evaluation of different combinations of initialization~(Init.) and optimization~(Opt.)}
\label{tbl:STSIMforInit}
\end{table}

\subsection{Inverting real textures}
The process of inverting real textures poses challenges as for some real images (including training images) there is no latent vector that will generate them. 
We thus need to find the latent vector that will produce the best approximation of such textures. 
For this, we first use the trained encoder to infer an initial latent vector, and then refine the inversion by iteratively optimizing the latent vector to minimize Eq.~\ref{eq:opt_gan_invert} with default Adam optimizer. 
Examples of the initial and refined inversions are given in Fig.~\ref{fig:textureAnaSynTrain}.  All these examples are in the training set, but similar results are expected for all real textures.
To demonstrate the necessity of encoder initialization, we repeated the experiments with two commonly used initialization schemes in the literature, random sampling and initializing with the mean latent vector $\bar{\mathbf{w}}$, which we estimate by randomly sampling 100K vectors from the i.i.d. Gaussian space $Z$ and then transforming them into the $W$ space via the mapping network $M(\cdot)$.

To assess the inversion quality, we use STSIM-1 and STSIM-2, structural texture
similarity metrics proposed by Zujovic \etal\ \cite{zujovic_tip13}.
Note that the popular structural similarity index (SSIM)
\cite{wang04ssimOverview} is ill-suited for texture similarity as it
relies on point-by-point comparisons \cite{zujovic_tip13}. 
Numerical evaluations are summarized in Table~\ref{tbl:STSIMforInit}. 
Results show that the encoder surpasses both random and
$\bar{\mathbf{w}}$ initialization, and that iterative optimization via Eq.~\ref{eq:opt_gan_invert}
result in further improvements in reconstruction similarity.

\begin{figure}[t]
\centering
\includegraphics[width=0.48\textwidth]{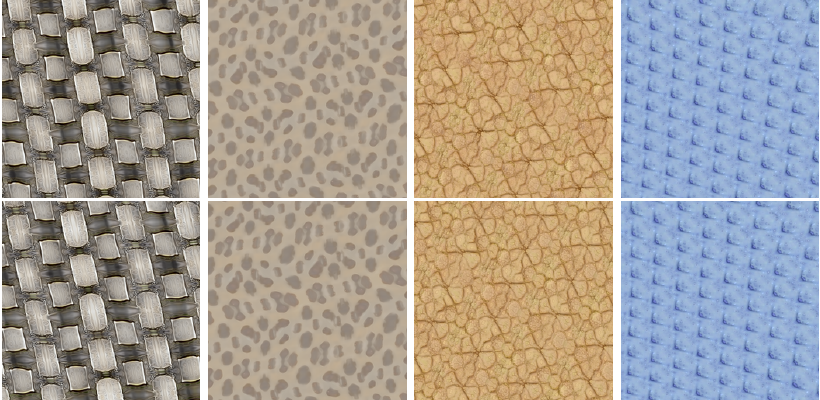}
\caption{Synthesizing crops from the same textures}
\label{fig:cropsFromTexture}
\end{figure}

\subsection{Obtaining different crops of a generated texture}

\begin{figure*}[t]
\centering
\includegraphics[width=1.0\textwidth]{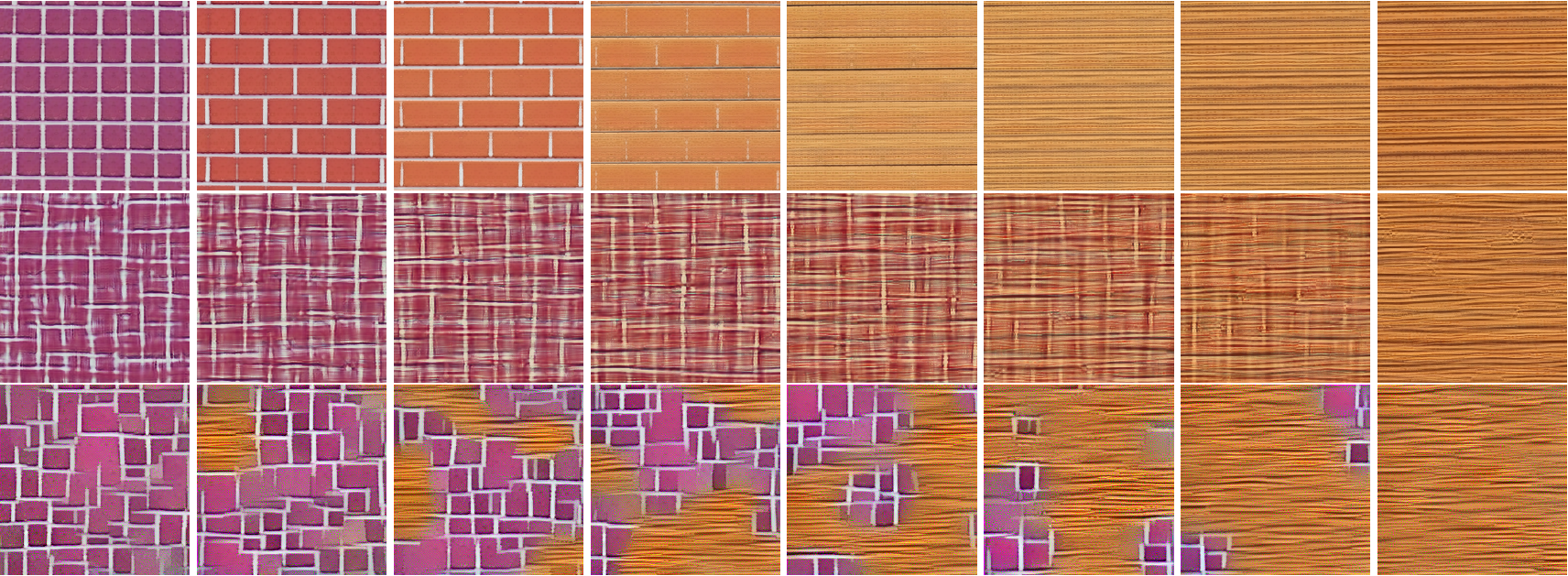}
\caption{Global texture interpolation: proposed approach (top), Portilla and Simoncelli (middle), and Gatys (bottom)}
\label{fig:globalInterpolate}
\end{figure*}

The procedure of generating different crops of a texture in the latent space largely resembles that of inverting real textures.
First, given a latent vector, a texture is generated by the network and the Gramian matrices are computed. 
Then, the latent vector is perturbed by adding Gaussian noise $\mathcal{N}(0, 0.01)$, and used as initialization in the iterative procedure using the updated rule of Eq.~\ref{eq:opt_gan_invert}, until convergence to a latent vector in the vicinity of the original vector.
Examples are shown in Fig.~\ref{fig:cropsFromTexture}.

\begin{figure*}[t]
\centering
\includegraphics[width=1.0\textwidth]{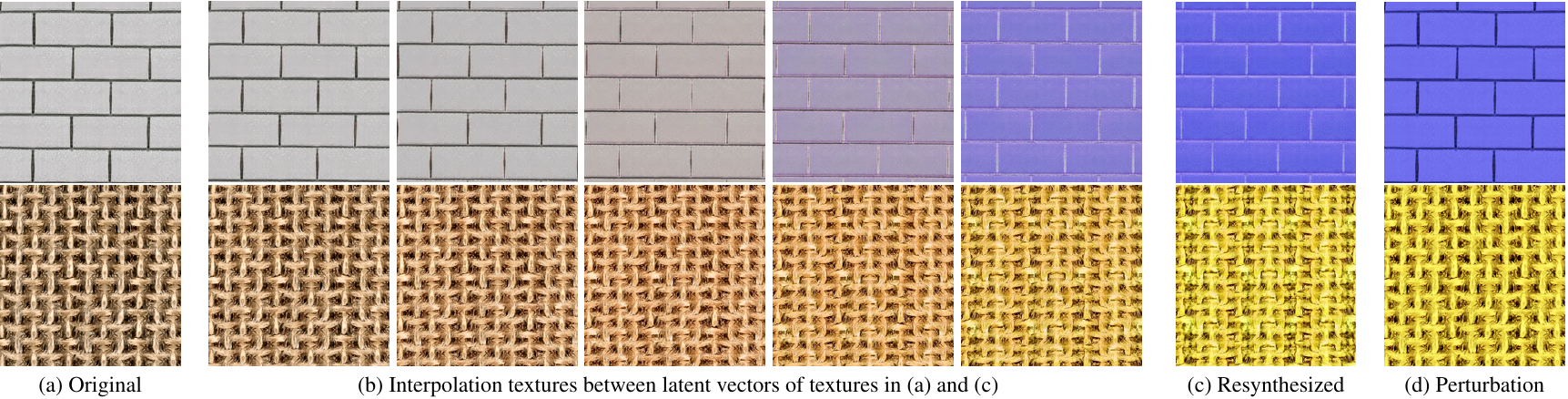}
\caption{Local interpolation between original and inferred latent vector of color perturbed texture}
\label{fig:colorInterpolate}
\end{figure*}

\begin{figure*}[t]
\centering
\includegraphics[width=1.0\textwidth]{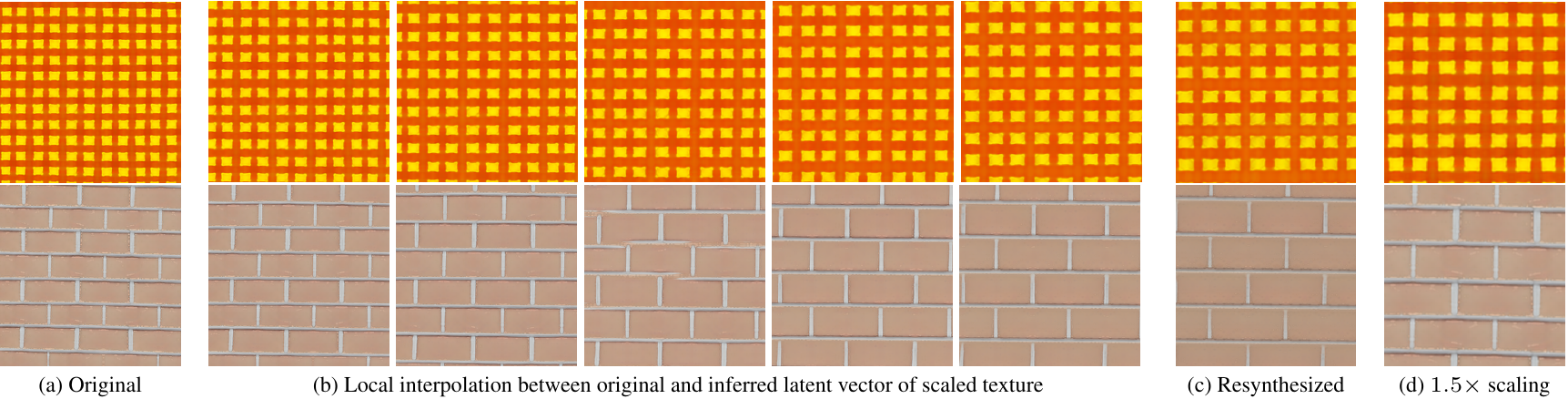}
\caption{Local interpolation between original and inferred latent vectors of scaled textures}
\label{fig:scaleInterpolate}
\end{figure*}

\subsection{Global texture interpolation}
To examine the global characteristics of the learned latent space of textures, we experiment by randomly and independently sampling a pair of latent vectors $\mathbf{w}$ and interpolating in the $W$ space. 
We compare the proposed approach with interpolation in the feature space of Portilla and Simoncelli \cite{portilla00} and Gatys \cite{Gatys:TextureSynthCNN:NIPS2015}, namely image statistics and Gramian matrices, respectively. 
Fig.~\ref{fig:globalInterpolate} demonstrate that latent space interpolation with the proposed approach yields valid spatially homogeneous textures, which the other two approaches produce an (occasionally blobby)  amalgamation of the two endpoints.

\subsection{Local texture interpolation}
To explore the local characteristics of the learned latent space of textures, we modified the visual properties of synthesized textures by simple image transformations,~\eg scaling and color perturbations. 
Then we infered the latent vectors of the modified textures via the same approach as inverting real textures. 
The inferred latent vectors serve as approximations of the modified synthesized textures.
We then interpolated between original and inferred latent vector of the modified image.
Examples are shown in Figs.~\ref{fig:colorInterpolate} and~\ref{fig:scaleInterpolate} for color and scaling modifications, respectively.
Observe that the latent space exhibits smooth transitions.

\subsection{Ablation studies}
\subsubsection{Image-domain loss for encoder training}
Previous methods of training an encoder network are largely dependent on obtaining supervision signals in the image domain. 
We experimented with both image- and latent-domain loss as shown in Figs.~\ref{fig:latentdomainsupervision}a and \ref{fig:latentdomainsupervision}b, and found that 
training the encoder with real textures and image-domain loss struggled to converge.
We hypothesize that the convergence failure is due to the following:
1. As opposed to generic images (\eg, human faces), where millions of training images are available, the relatively limited size of our dataset prevents robust training of the encoder.
2. In contrast to obtaining supervision signals directly from latent space, image-domain loss functions have their own priors and pose inductive biases that are ill-suited to textures, which limits the choice of loss functions. 
3. As illustrated in Fig.~\ref{fig:latentdomainsupervision}a, the learning of encoder with real textures hinges on gradient signals back-propagated through a generator network that is already deep, causing training instabilities.

\subsubsection{Image-domain loss for real image inversion}

\begin{table}[t]
\centering
\begin{tabular}{||l | c | c ||} 
 \hline
 & STSIM-1 & STSIM-2 \\ [0.5ex]                                                     
 \hline\hline
 Pixel-wise $\mathcal{L}_2$ loss & 0.8067 & 0.8996   \\ 
 \hline
 Content loss & 0.7703 & 0.8910 \\
 \hline
 Style loss  & \textbf{0.9748} & \textbf{0.9746}  \\
 \hline
\end{tabular}
\caption{Quantitative evaluation of different loss functions for inverting training textures}
\label{tbl:loss}
\end{table}

The stochastic nature of textures poses challenges in choosing adequate loss functions for inverting real textures. 
For demonstration, we repeated the process of inverting training textures, \ie, initialization with the trained encoder and subsequent iterative refinement, with different loss functions, pixel-wise $\mathcal{L}_2$ loss, content loss \cite{Johnson:Perceptual:ECCV2016}, and style loss \cite{Gatys:StyleTransfer:CVPR2016}. 
The results for different types of loss are presented in Table~\ref{tbl:loss} and show that losses that rely heavily on location-wise information, significantly degrade performance when applied to textures. It is worth noting that the style loss has limitations without proper initialization, which suggests that an ideal image-domain loss for textures is still missing.

\subsubsection{Latent vector reconstruction error of encoder}
To further investigate the trained encoder in the latent domain, we randomly sample a latent $\mathbf{w}$, generate a texture $G_{\boldsymbol\theta}(\mathbf{w})$, and pass the texture into an encoder (with trained or randomly initialized weights), yielding the estimated latent vector $F_{\boldsymbol\varphi}(G_{\boldsymbol\theta}(\mathbf{w}))$.
As illustrated in Fig.~\ref{fig:latentReconError}, the trained encoder significantly reduces the distance between the original latent vector $\mathbf{w}$ and the reconstructed $F_{\boldsymbol\varphi}(G_{\boldsymbol\theta}(\mathbf{w}))$,
with no significant overlap between the distributions of errors corresponding to the trained and randomly initialized encoder.
This experiment was conducted with 10K samples.
\begin{figure}
\centering
\includegraphics[width=0.45\textwidth]{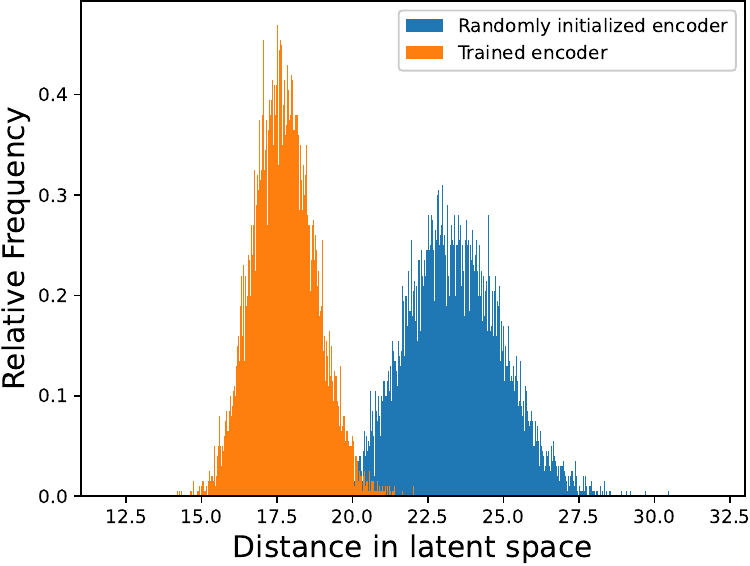}
\caption{Distributions of $\mathcal{L}_2$ distance $\|\mathbf{w}-F_{\boldsymbol\varphi}(G_{\boldsymbol\theta}(\mathbf{w}))\|$ for both trained and randomly initialized encoder}
\vspace{-20pt}
\label{fig:latentReconError}
\end{figure}

\vspace{-20pt}
\section{Conclusions}
We considered the problem of texture representation in the context of GAN-based synthesis and inversion using  a latent domain reconstruction consistency criterion and iterative refinement. 
We constructed a texture dataset that facilitates training of a reliable backbone generator network, and designed a subjective test for evaluating its performance, especially its ability to generalize.
We proposed a novel methodology for training an encoder based solely on latent domain supervision and showed its effectiveness for inverting the space of generated textures. 
We also showed that for real textures, an iterative refinement step with Gramian loss is needed.  The use of a statistical criterion like the Gramian loss is dictated by the stochastic nature of textures.
Local and global latent space interpolation demonstrates that the induced latent space exhibits smooth trajectories.  We also show that small latent vector variations generate different crops of the same texture.  Overall, we have shown that the proposed GAN-based synthesis and inversion hold great promise for texture modeling.


\begin{thebibliography}{10}\itemsep=-1pt

\bibitem{Abdal:Image2StyleGAN:ICCV2019}
Rameen Abdal, Yipeng Qin, and Peter Wonka.
\newblock {Image2StyleGAN}: How to embed images into the {StyleGAN} latent
  space?
\newblock In {\em 2019 IEEE/CVF International Conference on Computer Vision
  (ICCV)}, pages 4431--4440, 2019.

\bibitem{Abdal:Image2StyleGAN++:CVPR2020}
Rameen Abdal, Yipeng Qin, and Peter Wonka.
\newblock {Image2StyleGAN++:} how to edit the embedded images?
\newblock In {\em 2020 IEEE/CVF Conference on Computer Vision and Pattern
  Recognition (CVPR)}, pages 8293--8302, 2020.

\bibitem{Andrea:texture:PRL2016}
Vincent Andrearczyk and Paul~F. Whelan.
\newblock Using filter banks in convolutional neural networks for texture
  classification.
\newblock {\em Pattern Recogn. Lett.}, 84:63–69, dec 2016.

\bibitem{Bell:MINC2500:CVPR2015}
Sean Bell, Paul Upchurch, Noah Snavely, and Kavita Bala.
\newblock Material recognition in the wild with the materials in context
  database.
\newblock In {\em 2015 IEEE Conference on Computer Vision and Pattern
  Recognition (CVPR)}, pages 3479--3487, 2015.

\bibitem{Bergmann:PSGan:ICML2017}
Urs Bergmann, Nikolay Jetchev, and Roland Vollgraf.
\newblock Learning texture manifolds with the periodic spatial {GAN}.
\newblock In {\em Proceedings of the 34th International Conference on Machine
  Learning}, volume~70 of {\em Proceedings of Machine Learning Research}, pages
  469--477. PMLR, 06--11 Aug 2017.

\bibitem{Bruna:ScatNet:PAMI2013}
Joan Bruna and Stephane Mallat.
\newblock Invariant scattering convolution networks.
\newblock {\em IEEE Transactions on Pattern Analysis and Machine Intelligence},
  35(8):1872--1886, 2013.

\bibitem{Burghouts:ALOT:PRL2009}
Gertjan~J. Burghouts and Jan-Mark Geusebroek.
\newblock Material-specific adaptation of color invariant features.
\newblock {\em Pattern Recognition Letters}, 30(3):306--313, 2009.

\bibitem{cano88}
D. Cano and T.~H. Minh.
\newblock Texture synthesis using hierarchical linear transforms.
\newblock {\em Signal Processing}, 15:131--148, 1988.

\bibitem{Chai:InvertGAN:ICLR2021}
Lucy Chai, Jonas Wulff, and Phillip Isola.
\newblock Using latent space regression to analyze and leverage
  compositionality in {GANs}.
\newblock In {\em 9th International Conference on Learning Representations,
  {ICLR} 2021, Virtual Event, Austria, May 3-7, 2021}, 2021.

\bibitem{Chan:PCANet:TIP2015}
Tsung-Han Chan, Kui Jia, Shenghua Gao, Jiwen Lu, Zinan Zeng, and Yi Ma.
\newblock Pcanet: A simple deep learning baseline for image classification?
\newblock {\em IEEE Transactions on Image Processing}, 24(12):5017--5032, 2015.

\bibitem{Chen:ImplicitDecode:CVPR2019}
Zhiqin Chen and Hao Zhang.
\newblock Learning implicit fields for generative shape modeling.
\newblock In {\em 2019 IEEE/CVF Conference on Computer Vision and Pattern
  Recognition (CVPR)}, pages 5932--5941, 2019.

\bibitem{Cimpoi:DTD:CVPR2014}
Mircea Cimpoi, Subhransu Maji, Iasonas Kokkinos, Sammy Mohamed, and Andrea
  Vedaldi.
\newblock Describing textures in the wild.
\newblock In {\em 2014 IEEE Conference on Computer Vision and Pattern
  Recognition}, pages 3606--3613, 2014.

\bibitem{cimpoi_ijcv2016}
Mircea Cimpoi, Subhransu Maji, Iasonas Kokkinos, and Andrea Vedaldi.
\newblock Deep filter banks for texture recognition, description, and
  segmentation.
\newblock {\em International Journal of Computer Vision}, 118(1):65--94, May
  2016.

\bibitem{Creswell:InvertGAN:NNLS2019}
Antonia Creswell and Anil~Anthony Bharath.
\newblock Inverting the generator of a generative adversarial network.
\newblock {\em IEEE Transactions on Neural Networks and Learning Systems},
  30(7):1967--1974, 2019.

\bibitem{Dai:texture:CVPR2017}
Xiyang Dai, Joe Yue-Hei Ng, and Larry~S. Davis.
\newblock Fason: First and second order information fusion network for texture
  recognition.
\newblock In {\em 2017 IEEE Conference on Computer Vision and Pattern
  Recognition (CVPR)}, pages 6100--6108, 2017.

\bibitem{curet}
Kristin~J. Dana, Bram~Van Ginneken, Shree~K. Nayar, and Jan~J. Koenderink.
\newblock {CUReT: Columbia-Utrecht} reflectance and texture database.
\newblock www1.cs.columbia.edu/CAVE/software/curet/.

\bibitem{dana_koenderink_acmt99}
Kristin~J. Dana, Bram van Ginneken, Shree~K. Nayar, and Jan~J. Koenderink.
\newblock Reflectance and texture of real-world surfaces.
\newblock {\em ACM Trans. Graphics}, 18(1):1--34, Jan. 1999.

\bibitem{debonet97}
Jeremy~S. {De Bonet} and Paul~A. Viola.
\newblock A non-parametric multi-scale statistical model for natural images.
\newblock {\em Adv.~in Neural Info.~Processing Systems}, 9, 1997.

\bibitem{Deng:Arcface:CVPR2019}
Jiankang Deng, Jia Guo, Niannan Xue, and Stefanos Zafeiriou.
\newblock Arcface: Additive angular margin loss for deep face recognition.
\newblock In {\em 2019 IEEE/CVF Conference on Computer Vision and Pattern
  Recognition (CVPR)}, pages 4685--4694, 2019.

\bibitem{efros_iccv99}
Alexei~A. Efros and Thomas~K. Leung.
\newblock Texture synthesis by non-parametric sampling.
\newblock In {\em Proc. Seventh Intl. Conf. Computer Vision (ICCV)}, volume~2,
  pages 1033--1038, Kerkyra, Greece, Sept. 1999.

\bibitem{fritz2004kth}
Mario Fritz, Eric Hayman, Barbara Caputo, and Jan-Olof Eklundh.
\newblock The {KTH-TIPS} database, 2004.

\bibitem{Gatys:TextureSynthCNN:NIPS2015}
Leon~A. Gatys, Alexander~S. Ecker, and Matthias Bethge.
\newblock Texture synthesis using convolutional neural networks.
\newblock In {\em Proceedings of the 28th International Conference on Neural
  Information Processing Systems - Volume 1}, page 262–270, Cambridge, MA,
  USA, 2015. MIT Press.

\bibitem{Gatys:StyleTransfer:CVPR2016}
Leon~A. Gatys, Alexander~S. Ecker, and Matthias Bethge.
\newblock Image style transfer using convolutional neural networks.
\newblock In {\em 2016 IEEE Conference on Computer Vision and Pattern
  Recognition (CVPR)}, pages 2414--2423, 2016.

\bibitem{Goodfellow:GAN:NIPS2014}
Ian~J. Goodfellow, Jean Pouget-Abadie, Mehdi Mirza, Bing Xu, David
  Warde-Farley, Sherjil Ozair, Aaron Courville, and Yoshua Bengio.
\newblock Generative adversarial nets.
\newblock In {\em Proceedings of the 27th International Conference on Neural
  Information Processing Systems - Volume 2}, page 2672–2680, Cambridge, MA,
  USA, 2014. MIT Press.

\bibitem{heeger95b}
David~J. Heeger and James~R. Bergen.
\newblock Pyramid-based texture analysis/synthesis.
\newblock In {\em Proc.~Int.~Conf.~Image Processing (ICIP), vol.~III}, pages
  648--651, Washington, DC, Oct. 1995.

\bibitem{Henzler:3DTexture:CVPR2020}
Philipp Henzler, Niloy~J. Mitra, and Tobias Ritschel.
\newblock Learning a neural {3D} texture space from {2D} exemplars.
\newblock In {\em 2020 IEEE/CVF Conference on Computer Vision and Pattern
  Recognition (CVPR)}, pages 8353--8361, 2020.

\bibitem{Ho:DDPM:NeurIPS2020}
Jonathan Ho, Ajay Jain, and Pieter Abbeel.
\newblock Denoising diffusion probabilistic models.
\newblock In {\em Advances in Neural Information Processing Systems},
  volume~33, pages 6840--6851. Curran Associates, Inc., 2020.

\bibitem{Isola:pix2pix:CVPR2017}
Phillip Isola, Jun-Yan Zhu, Tinghui Zhou, and Alexei~A. Efros.
\newblock Image-to-image translation with conditional adversarial networks.
\newblock In {\em 2017 IEEE Conference on Computer Vision and Pattern
  Recognition (CVPR)}, pages 5967--5976, 2017.

\bibitem{Jetchev:SpatialGAN:NIPS2016W}
Nikolay Jetchev, Urs Bergmann, and Roland Vollgraf.
\newblock Texture synthesis with spatial generative adversarial networks.
\newblock In {\em NeurIPS Workshop on Adversarial Training}, 2016.

\bibitem{Johnson:Perceptual:ECCV2016}
Justin Johnson, Alexandre Alahi, and Li Fei-Fei.
\newblock Perceptual losses for real-time style transfer and super-resolution.
\newblock In Bastian Leibe, Jiri Matas, Nicu Sebe, and Max Welling, editors,
  {\em Computer Vision -- ECCV 2016}, pages 694--711. Springer International
  Publishing, 2016.

\bibitem{Karras:Ada:NIPS2022}
Tero Karras, Miika Aittala, Janne Hellsten, Samuli Laine, Jaakko Lehtinen, and
  Timo Aila.
\newblock Training generative adversarial networks with limited data.
\newblock In {\em Advances in Neural Information Processing Systems},
  volume~33, pages 12104--12114. Curran Associates, Inc., 2020.

\bibitem{Karras:StyleGAN3:NIPS2021}
Tero Karras, Miika Aittala, Samuli Laine, Erik H\"{a}rk\"{o}nen, Janne
  Hellsten, Jaakko Lehtinen, and Timo Aila.
\newblock Alias-free generative adversarial networks.
\newblock In {\em Advances in Neural Information Processing Systems},
  volume~34, pages 852--863. Curran Associates, Inc., 2021.

\bibitem{Karras:StyleGAN:CVPR2020}
Tero Karras, Samuli Laine, and Timo Aila.
\newblock A style-based generator architecture for generative adversarial
  networks.
\newblock In {\em 2019 IEEE/CVF Conference on Computer Vision and Pattern
  Recognition (CVPR)}, pages 4396--4405, 2019.

\bibitem{Karras:StyleGAN2:CVPR2020}
Tero Karras, Samuli Laine, Miika Aittala, Janne Hellsten, Jaakko Lehtinen, and
  Timo Aila.
\newblock Analyzing and improving the image quality of {StyleGAN}.
\newblock In {\em 2020 IEEE/CVF Conference on Computer Vision and Pattern
  Recognition (CVPR)}, pages 8107--8116, 2020.

\bibitem{Kingma:Adam:ICLR2015}
Diederik~P. Kingma and Jimmy Ba.
\newblock Adam: {A} method for stochastic optimization.
\newblock In {\em 3rd International Conference on Learning Representations,
  {ICLR} 2015, San Diego, CA, USA, May 7-9, 2015, Conference Track
  Proceedings}, 2015.

\bibitem{Alex:CNN:ACM2017}
Alex Krizhevsky, Ilya Sutskever, and Geoffrey~E. Hinton.
\newblock Imagenet classification with deep convolutional neural networks.
\newblock {\em Commun. ACM}, 60(6):84–90, may 2017.

\bibitem{Ledig:SRGAN:CVPR2017}
Christian Ledig, Lucas Theis, Ferenc Huszár, Jose Caballero, Andrew
  Cunningham, Alejandro Acosta, Andrew Aitken, Alykhan Tejani, Johannes Totz,
  Zehan Wang, and Wenzhe Shi.
\newblock Photo-realistic single image super-resolution using a generative
  adversarial network.
\newblock In {\em 2017 IEEE Conference on Computer Vision and Pattern
  Recognition (CVPR)}, pages 105--114, 2017.

\bibitem{Levina:TextureSynMRF:AS06}
Elizaveta Levina and Peter~J. Bickel.
\newblock {Texture synthesis and nonparametric resampling of random fields}.
\newblock {\em The Annals of Statistics}, 34(4):1751 -- 1773, 2006.

\bibitem{Lin:texture:ICCV2015}
Tsung-Yu Lin, Aruni RoyChowdhury, and Subhransu Maji.
\newblock Bilinear cnn models for fine-grained visual recognition.
\newblock In {\em 2015 IEEE International Conference on Computer Vision
  (ICCV)}, pages 1449--1457, 2015.

\bibitem{Liu:bow2cnn:IJCV2019}
Li Liu, Jie Chen, Paul Fieguth, Guoying Zhao, Rama Chellappa, and Matti
  Pietik\"{a}inen.
\newblock From bow to cnn: Two decades of texture representation for texture
  classification.
\newblock {\em Int. J. Comput. Vision}, 127(1):74–109, jan 2019.

\bibitem{Oechsle:TexField:ICCV2019}
Michael Oechsle, Lars Mescheder, Michael Niemeyer, Thilo Strauss, and Andreas
  Geiger.
\newblock Texture fields: Learning texture representations in function space.
\newblock In {\em 2019 IEEE/CVF International Conference on Computer Vision
  (ICCV)}, pages 4530--4539, 2019.

\bibitem{Paget:TextureSynMRF:TIP98}
R. Paget and I.D. Longstaff.
\newblock Texture synthesis via a noncausal nonparametric multiscale markov
  random field.
\newblock {\em IEEE Transactions on Image Processing}, 7(6):925--931, 1998.

\bibitem{porat89}
M. Porat and Y.~Y. Zeevi.
\newblock Localized texture processing in vision: Analysis and synthesis in
  {G}aborian space.
\newblock {\em {IEEE} Trans. Biomed. Eng.}, 36(1):115--129, 1989.

\bibitem{Portenier:GramGAN:NIPS2020}
Tiziano Portenier, Siavash Arjomand~Bigdeli, and Orcun Goksel.
\newblock {GramGAN}: Deep {3D} texture synthesis from {2D} exemplars.
\newblock In H. Larochelle, M. Ranzato, R. Hadsell, M.F. Balcan, and H. Lin,
  editors, {\em Advances in Neural Information Processing Systems}, volume~33,
  pages 6994--7004. Curran Associates, Inc., 2020.

\bibitem{portilla00}
Javier Portilla and Eero~P. Simoncelli.
\newblock A parametric texture model based on joint statistics of complex
  wavelet coefficients.
\newblock {\em International Journal of Computer Vision}, 40(1):49--71, Oct.
  2000.

\bibitem{Richardson:pSp:CVPR2021}
Elad Richardson, Yuval Alaluf, Or Patashnik, Yotam Nitzan, Yaniv Azar, Stav
  Shapiro, and Daniel Cohen-Or.
\newblock Encoding in style: a {StyleGAN} encoder for image-to-image
  translation.
\newblock In {\em 2021 IEEE/CVF Conference on Computer Vision and Pattern
  Recognition (CVPR)}, pages 2287--2296, 2021.

\bibitem{Sharan:FMD:JOV2014}
Lavanya Sharan, Ruth Rosenholtz, and Edward~H. Adelson.
\newblock {Accuracy and speed of material categorization in real-world images}.
\newblock {\em Journal of Vision}, 14(9):12--12, 08 2014.

\bibitem{Shen:InterFaceGAN:CVPR2020}
Yujun Shen, Jinjin Gu, Xiaoou Tang, and Bolei Zhou.
\newblock Interpreting the latent space of {GANs} for semantic face editing.
\newblock In {\em 2020 IEEE/CVF Conference on Computer Vision and Pattern
  Recognition (CVPR)}, pages 9240--9249, 2020.

\bibitem{Simonyan:VGG:ICLR2015}
Karen Simonyan and Andrew Zisserman.
\newblock Very deep convolutional networks for large-scale image recognition.
\newblock In {\em 3rd International Conference on Learning Representations,
  {ICLR} 2015, San Diego, CA, USA, May 7-9, 2015, Conference Track
  Proceedings}, 2015.

\bibitem{Sitzmann:INR:NIPS2020}
Vincent Sitzmann, Julien Martel, Alexander Bergman, David Lindell, and Gordon
  Wetzstein.
\newblock Implicit neural representations with periodic activation functions.
\newblock In {\em Advances in Neural Information Processing Systems},
  volume~33, pages 7462--7473. Curran Associates, Inc., 2020.

\bibitem{sohl:diffusion:ICML2015}
Jascha Sohl-Dickstein, Eric~A. Weiss, Niru Maheswaranathan, and Surya Ganguli.
\newblock Deep unsupervised learning using nonequilibrium thermodynamics.
\newblock In {\em Proceedings of the 32nd International Conference on
  International Conference on Machine Learning - Volume 37}, page 2256–2265.
  JMLR.org, 2015.

\bibitem{Ulyanov:TexNet:ICML2016}
Dmitry Ulyanov, Vadim Lebedev, Andrea Vedaldi, and Victor~S. Lempitsky.
\newblock Texture networks: Feed-forward synthesis of textures and stylized
  images.
\newblock In {\em Proceedings of the 33nd International Conference on Machine
  Learning, {ICML} 2016, New York City, NY, USA, June 19-24, 2016}, volume~48
  of {\em {JMLR} Workshop and Conference Proceedings}, pages 1349--1357.
  JMLR.org, 2016.

\bibitem{Ulyanov:imp_texturenet:CVPR2017}
Dmitry Ulyanov, Andrea Vedaldi, and Victor Lempitsky.
\newblock Improved texture networks: Maximizing quality and diversity in
  feed-forward stylization and texture synthesis.
\newblock In {\em 2017 IEEE Conference on Computer Vision and Pattern
  Recognition (CVPR)}, pages 4105--4113, 2017.

\bibitem{wang04ssimOverview}
Zhou Wang, Alan~C. Bovik, Hamid~R. Sheikh, and Eero~P. Simoncelli.
\newblock Image quality assessment: From error visibility to structural
  similarity.
\newblock {\em {IEEE} Trans. Image Process.}, 13(4):600--612, Apr. 2004.

\bibitem{Xia:InvertSurvey:PAMI}
Weihao Xia, Yulun Zhang, Yujiu Yang, Jing-Hao Xue, Bolei Zhou, and Ming-Hsuan
  Yang.
\newblock {GAN} inversion: A survey.
\newblock {\em IEEE Transactions on Pattern Analysis and Machine Intelligence},
  pages 1--17, 2022.

\bibitem{Xu:GH-Feat:CVPR2021}
Yinghao Xu, Yujun Shen, Jiapeng Zhu, Ceyuan Yang, and Bolei Zhou.
\newblock Generative hierarchical features from synthesizing images.
\newblock In {\em 2021 IEEE/CVF Conference on Computer Vision and Pattern
  Recognition (CVPR)}, pages 4430--4430, 2021.

\bibitem{Xue:GTOS:PAMI2022}
Jia Xue, Hang Zhang, Ko Nishino, and Kristin~J. Dana.
\newblock Differential viewpoints for ground terrain material recognition.
\newblock {\em IEEE Transactions on Pattern Analysis and Machine Intelligence},
  44(3):1205--1218, 2022.

\bibitem{Zhang:LPIPS:CVPR2018}
Richard Zhang, Phillip Isola, Alexei~A. Efros, Eli Shechtman, and Oliver Wang.
\newblock The unreasonable effectiveness of deep features as a perceptual
  metric.
\newblock In {\em 2018 IEEE/CVF Conference on Computer Vision and Pattern
  Recognition}, pages 586--595, 2018.

\bibitem{Zhu:IDInvert:ECCV2020}
Jiapeng Zhu, Yujun Shen, Deli Zhao, and Bolei Zhou.
\newblock In-domain {GAN} inversion for real image editing.
\newblock In Andrea Vedaldi, Horst Bischof, Thomas Brox, and Jan-Michael Frahm,
  editors, {\em Computer Vision -- ECCV 2020}, pages 592--608. Springer
  International Publishing, 2020.

\bibitem{Zhu:ManipulateGAN:ECCV2016}
Jun-Yan Zhu, Philipp Kr{\"a}henb{\"u}hl, Eli Shechtman, and Alexei~A. Efros.
\newblock Generative visual manipulation on the natural image manifold.
\newblock In Bastian Leibe, Jiri Matas, Nicu Sebe, and Max Welling, editors,
  {\em Computer Vision -- ECCV 2016}, pages 597--613. Springer International
  Publishing, 2016.

\bibitem{zhu96}
S. Zhu, Y.~N. Wu, and D. Mumford.
\newblock Filters, random fields and maximum entropy ({FRAME}): Towards a
  unified theory for texture modeling.
\newblock In {\em IEEE Conf.~Computer Vision and pattern Recognition}, pages
  693--696, 1996.

\bibitem{zujovic_tip13}
Jana Zujovic, Thrasyvoulos~N. Pappas, and David~L. Neuhoff.
\newblock Structural texture similarity metrics for image analysis and
  retrieval.
\newblock {\em {IEEE} Trans. Image Process.}, 22(7):2545--2558, July 2013.

\end{thebibliography}
\end{document}